\newtheorem{theorem}{Theorem}
\newtheorem{lemma}[theorem]{Lemma}
\theoremstyle{remark}
\newtheorem{remark}[theorem]{Remark}
\pgfplotsset{compat=1.18}
\title{K$^*$-Means: A Parameter-free Clustering Algorithm}
\newcommand{\method}{\texttt{k$^*$means}\xspace}
\DeclareMathOperator*{\argmin}{argmin}
\DeclareMathOperator{\Var}{Var}
\author{
  Louis Mahon\thanks{\url{https://lou1sm.github.io/louismahonville/}} \\
  School of Informatics\\
  University of Edinburgh\\
  \texttt{lmahonology@gmail.com} \\
  \And 
  Mirella Lapata \\
  School of Informatics\\
  University of Edinburgh\\  
}
\begin{document}

\acrodef{bic}[BIC]{Bayesian information criterion}
\acrodef{mdl}[MDL]{minimum description length}
\acrodef{gmm}[GMM]{Gaussian mixture model}

\maketitle

\begin{abstract}
  Clustering is a widely used and powerful machine learning technique, but its effectiveness is often limited by the need to specify the number of clusters,~$k$, or by relying on thresholds that implicitly determine~$k$. We introduce \method, a novel clustering algorithm that eliminates the need to set $k$~or any other parameters. Instead, it uses the minimum description length principle to automatically determine the optimal number of clusters, $k^*$, by splitting and merging clusters while also optimising the standard $k$-means objective. We prove that \method is guaranteed to converge and demonstrate experimentally that it significantly outperforms existing methods in scenarios where~$k$ is unknown. We also show that it is accurate in estimating~$k$, and that empirically its runtime is competitive with existing methods, and scales well with dataset size. 
\end{abstract}

\section{Introduction}
Clustering is a fundamental task in machine learning. As well as allowing data visualisation and exploration, it is used for several more specific functions in the context of machine learning systems, such as representation learning \cite{liu2023dinosr, niu2024owmatch}, federated learning \cite{ma2023cam}, exploration in reinforcement learning \cite{wagner2024just}, anomaly detection \cite{Markovitz_2020_CVPR}, and has found widespread application in the natural sciences \cite{xu2025analyzing, kisi2025integration, meyer2025uncovering, hebdon2025dog}. It has also been interwoven with deep learning feature extraction in the areas of deep clustering \cite{caron2018deep, mahon2021selective, miklautz2024breaking, liu2023deep, vo2024automatic} and deep graph clustering \cite{mo2024revisiting, fini2023semi}.
Clustering can produce meaningful and interpretable partitions of data, even in the absence of information typically required by other machine learning methods, such as annotated class labels.

However, almost all existing clustering algorithms still require some user-set parameters, which limits their applicability to cases where the user can choose appropriate values. Two common classes of clustering algorithms are centroid-based and density-based. The former, typified by $k$-means, work by finding the optimal location for cluster centre-points (centroids), and then assigning points to nearby centres. These algorithms generally require the user to specify the number of clusters.  Density-based algorithms aim to locate clusters where the density of points is high. They also require some threshold(s) to determine what constitutes a high-density region and where to separate them. 

In this paper, we design a clustering algorithm that eliminates the need for setting the number of clusters, tunable thresholds, or any other parameters. Our algorithm, \method, extends $k$-means by automatically determining the optimal number of clusters, $k$, using the \ac{mdl} principle. The MDL principle states that the optimum data representation is that containing the fewest bits. It has been shown to be effective in a number of applied tasks, including complexity quantification \cite{mahon2024minimum, mahon2024towards, mahon2025local} and temporal segmentation \cite{mahon-lapata-2024-modular, mahon2025parameter}. \method uses MDL by optimising the information-theoretic objective of minimising the description length of the data under the model. Specifically, this is quantified as the number of bits required to represent the cluster centroids, and the cluster labels of each point, which we refer to as the \emph{index cost}, along with the number of bits needed to represent the displacement of each point from its centroid, termed the \emph{residual cost}. Too many clusters creates a prohibitively high index cost, while too few creates a prohibitively high residual cost, so the objective guides the model towards a reasonable value of $k$. 
We optimise this objective by including in the model two subclusters of every cluster. The ``assign'' and ``update'' steps of $k$-means are applied to the subclusters in the same way as to the main clusters, and the algorithm has the option to split a cluster into its two subclusters, or to merge two clusters together, if it will reduce the overall description length. 

Despite its simplicity, $k$-means remains the most widely used clustering algorithm, because it is fast, provably guaranteed to converge, has just one easily interpretable parameter, and achieves accuracy competitive with more complicated methods. We aim to maintain these advantages with \method. We provide a proof that \method is also guaranteed to converge. Additionally, our experiments show that \method largely maintains the speed and accuracy advantages of $k$-means. It is as fast or faster than most other $k$-agnostic clustering methods, scales well with dataset size, and is close to or on par with the accuracy of $k$-means, even when $k$-means has an oracle for the true value of $k$. We also show in synthetic experiments that it can successfully identify $k$ more accurately than existing methods. Our contributions are summarised as follows:
\begin{itemize}
    \item We introduce \method, an entirely parameter-free clustering algorithm;
    \item We give a formal proof that \method will convergence in finite time;
    \item We design synthetic data experiments to test whether \method can infer the true value of $k$, and show that it can with much higher accuracy than existing methods;
    \item We show experimentally that, with respect to standard clustering metrics, it is more accurate than all existing methods that do not require setting $k$, and is as fast as or faster than the majority of these methods. 
\end{itemize}
The remainder of this paper is organised as follows. Section~\ref{sec:related-work} discusses related work, Section~\ref{sec:method} describes the algorithm of \method in detail, Section \ref{sec:experimental-eval} presents experimental results, and finally Section \ref{sec:conclusion} concludes and summarises.

\section{Related Work} \label{sec:related-work}
Two well-known centroid-based clustering algorithsm are $k$-means,  \cite{macqueen1967classification, lloyd1982least} and \acp{gmm}  \cite{dempster1977maximum}. The former partition data into $k$~clusters by iteratively assigning points to the nearest centroid and updating centroids until convergence, and the latter which fit a multivariate normal model via expectation maximization. A number of more complex clustering algorithms are also in widespread use.  

Spectral Clustering \cite{ng2001spectral} transforms data using eigenvectors of a similarity matrix before applying a clustering algorithm such as $k$-means. 
Mean Shift \cite{comaniciu2002mean} discovers clusters by iteratively shifting points toward areas of higher density until convergence. It does not require setting $k$, but does require a bandwidth parameter. Affinity Propagation \cite{frey2007clustering} identifies exemplars among data points and forms clusters by exchanging messages between pairs of samples until convergence. Like mean shift, it does not require specifying the number of clusters~$k$, but instead relies on a preference parameter and a damping factor. A common drawback of both mean shift and affinity propagation is their quadratic space complexity, which limits scalability.

DBSCAN \cite{ester1996density} identifies dense regions as clusters by grouping points with many neighbours, while marking sparse points as noise. OPTICS \cite{ankerst1999optics} extends DBSCAN by ordering points based on reachability distance, allowing it to identify clusters with varying densities. HDBSCAN \cite{campello2013density} further builds on DBSCAN by introducing a hierarchical clustering framework that extracts flat clusters based on stability. Although DBSCAN and its variants do not require specifying the number of clusters, they rely on other parameters—such as \texttt{eps} and \texttt{min-pts}, which specify the neighbourhood size and the number of points required to form a `dense region'. OPTICS avoids setting 
\texttt{eps}
by computing reachability distances over a range of values, but in its place introduces a steepness parameter to define cluster boundaries (where the reachability value decreases faster than this steepness). Tuning these parameters can yield
a wide range of values for the number of predicted clusters (see Appendix \ref{app:dbscan-vary-k}). Thus, in the absence of prior knowledge about the number of clusters or appropriate parameter values, DBSCAN and its derivatives can be difficult to apply effectively.

X-Means \cite{pelleg2000x} extends $k$-means by automatically determining the optimal number of clusters using the \ac{bic} \cite{schwarz1978estimating}. 
Our method is similar to X-Means in two respects: firstly, in that it selects~$k$ using an agnostic criterion from probability/information theory and secondly in that it considers bifurcating each centroid as the means by which to explore different values of~$k$. However, there are some important differences between the two methods. Our method uses \ac{mdl} as the criterion, whereas X-Means uses \ac{bic}. 
Secondly, our method does not require the $max_K$ parameter. It can, in principle, return any value of~$k$ (although this would have to be bounded by~$N$). Thirdly, X-Means operates in two steps, returning a set of possible models by iteratively using local \ac{bic} on each cluster to determine whether it should split, and then using global \ac{bic} to select the best model from this set. This means it needs to run $k$-means to convergence multiple times, once for each model. \method, in contrast, returns the best model in one stage, by only splitting when it reduces the \ac{mdl}, and keeping a pre-initialised pair of sub-centroids for each cluster, which are updated one step at a time while $k$ is being optimised. This means \method only needs to run $k$-means to convergence once. \citet{ishioka2000extended} uses a very similar method to X-means, keeping a stack of clusters during training, and sequentially running $k$-means with $k$=2 on each. Again, this is much less efficient than \method, which does not need to run multiple models to convergence.

Another common usage of \ac{bic} is to select the number of clusters at a model selection stage. It is common in clustering applications to deal with the problem of unknown $k$ by training many $k$-means models with varying values of $k$, and selecting that with the lowest \ac{bic} \cite{zhang2014functional, lancaster2019reconceptualizing, salmanpour2022longitudinal}. Our experiments (Section \ref{subsec:sweep-k-comparison}) find that this is not only much slower than \method, as it requires running many models to convergence, but also less accurate, often severely overestimating $k$. A summary of the clustering algorithms discussed in this section and their parameters is presented in Table~\ref{tab:clustering_params}. 

\renewcommand{\arraystretch}{0.9} 
\begin{table}[t]
\caption{Common clustering algorithms and their required parameters}
\label{tab:clustering_params}
\centering
\resizebox{\textwidth}{!}{
\begin{tabular}{lp{9cm}}
\toprule
\textbf{Algorithm} & \textbf{Required Parameters} \\
\midrule
K-means & Number of clusters ($k$) \\
\addlinespace
Gaussian Mixture Models (GMM) & Number of components ($k$); Covariance type \\
\addlinespace
Spectral Clustering & Number of clusters ($k$); Affinity type \\
\addlinespace
Mean Shift & Bandwidth parameter (kernel width) \\
\addlinespace
Affinity Propagation & Preference parameter; Damping factor \\
\addlinespace
DBSCAN & Neighborhood radius (eps); Minimum points (minpts) \\
\addlinespace
HDBSCAN & Minimum cluster size; Minimum samples; Cluster selection eps\\
\addlinespace
OPTICS & Cut threshold for eps ($\xi$); Minimum neighborhood points (MinPts) \\
\addlinespace
X-Means & Maximum number of clusters; Minimum number of clusters\\
\addlinespace
\method & ---\\
\bottomrule
\end{tabular}
}
\end{table}

\section{The \method Algorithm} \label{sec:method}
\subsection{Quantifying Description Length} \label{subsec:quantifying-dl}
The Minimum Description Length (MDL) principle states that the best representation of the data is the one that can be specified exactly using the fewest number of bits.
In \method, we quantify a bit cost for the various components of a clustering model and how they change over training. This allows \method to directly minimise the description length in a single procedure that simultaneously finds the optimal number of clusters, $k^*$, and fits a $k$-means model with $k^*$ clusters. The bitcost of a data point $x$ under a clustering model has two parts, the cost of specifying which cluster it belongs to, which we call the \emph{index cost}, and the cost of specifying its displacement from that cluster`s centroid, which we call the \emph{residual cost}. The former requires selecting an element of $\{0, \dots ,K-1\}$, thus taking $\log{K}$ bits. The latter can be approximated, by the Kraft-McMillan inequality, as $-\log{p(x|c)}$, where $c$ is the centroid of $x$'s assigned cluster. We model the cluster distribution as a multivariate normal distribution with unit variance
\begin{align*}
    p(x|c) &= \frac{1}{(2\pi)^{d/2}} \exp\left(-\frac{1}{2}(x-c)^T(x-c)\right) \\
    \iff -\log{p(x|c)} &= \frac{d \log{2\pi} + ||x-c||^2}{2} \,.
\end{align*}
The total cost of the data under the model is the sum of this cost for all data points, plus the cost of the model itself, which for $k$ clusters, $d$ dimensions and floating point precision $m$, is $kdm$ bits. (The precision $m$ is chosen from the data as the smallest value that allows perfect representation.) This is the quantity minimised by \method. Formally, let $X$ be the data to be clustered. Let $\Pi(X)$ be the set of all partitions of $X$, and let $|P|$ be the number of subsets in a partition. The optimal partition $P^*$ is
\begin{equation} \label{eq:mdl-objective}
   P^* = \argmin_{P \in \Pi(X)} |P|dm + |X|\log{|P|} + \frac{1}{2}\sum_{S \in P} Q(S)\,,
\end{equation}
where $Q$ computes the total sum of squares: $Q(X) = |X| \Var{X}$ and then $k^* = |P^*|$. (Full derivation is provided in Appendix \ref{app:mdl-objective-derivation}).

\begin{algorithm}[t]
\caption{K*-means Algorithm}
\label{alg:k-star-means}
\begin{algorithmic}[1]
\small
\Procedure{K*-means}{$X$}
    \State $\text{best\_cost} \gets \infty$
    \State $\text{unimproved\_count} \gets 0$
    \State $\mu \gets \frac{1}{n} \sum_{i=1}^n x_i$
    \State $C \gets [X]$
    \State $\mu_s \gets [\textproc{InitSubcentroids}(X)]$
    \State $C_s \gets \left[\left\{x \in X : \|x-\mu_{s_1}\| < \|x-\mu_{s_2}\|\right\}, \left\{x \in X : \|x-\mu_{s_2}\| < \|x-\mu_{s_1}\|\right\}\right]$
    
    \While{true}
        \State $\mu, C, \mu_s, C_s \gets \textproc{KmeansStep}(X, \mu, C, \mu_s, C_s)$
        \State $\mu, C, \mu_s, C_s, \text{did\_split} \gets \textproc{MaybeSplit}(X, \mu, C, \mu_s, C_s)$
        
        \If{$\neg \text{did\_split}$}
            \State $\mu, C, \mu_s, C_s \gets \textproc{KmeansStep}(X, \mu, C, \mu_s, C_s)$
            \State $\mu, C, \mu_s, C_s \gets \textproc{MaybeMerge}(X, \mu, C, \mu_s, C_s)$
        \EndIf
        
        \State $\text{cost} \gets \textproc{MDLCost}(X, \mu, C)$
        
        \If{$\text{cost} < \text{best\_cost}$}
            \State $\text{best\_cost} \gets \text{cost}$
            \State $\text{unimproved\_count} \gets 0$
        \Else
            \State $\text{unimproved\_count} \gets \text{unimproved\_count} + 1$
        \EndIf
        
        \If{$\text{unimproved\_count} = \text{patience}$}
            \State \textbf{break}
        \EndIf
    \EndWhile
    
    \State \Return $\mu, C$
\EndProcedure
\vspace{1em}
\Procedure{MDLCost}{$X, \mu, C$}
    \State $d \gets$ the dimensionality of $X$
    \State $floatprecision \gets -\log$ of the minimum distance between any values in $X$
    \State $floatcost \gets \frac{max(X)-min(X)}{floatprecision}$
    \State $modelcost \gets |C|d \times floatcost$
    \State $idxcost \gets |X|\log({|C|})$
    \State $c \gets$ the sum of the squared distances of every point in X from its assigned centroid
    \State $residualcost \gets \frac{|X|d\log({2\pi}) + c}{2}$
    \State \Return $modelcost + idxcost + residualcost$
\EndProcedure
\end{algorithmic}
\end{algorithm}

\subsection{Minimising Description Length} \label{subsec:minimising-dl}
In this section, we describe the algorithm by which \method efficiently optimises Equation~\eqref{eq:mdl-objective}. For a more formal exposition, see Algorithm~1. 
The familiar Lloyd's algorithm for $k$-means alternates between two steps: \texttt{assign}, which assigns each point to its nearest centroid, and \texttt{update}, which updates the centroids of each cluster to the mean of all of its assigned points. As well as the centroids and clusters, \method keeps track of subcentroids and subclusters. Subclusters consist of a partition of each cluster into two, and subcentroids are the means of all points in each subcluster. These are updated during the \texttt{update} and \texttt{assign} steps in just the same way as the main clusters and centroids. Essentially, each cluster has a mini version of $k$-means happening inside it during training. 

\method introduces two additional steps, \texttt{maybe-split} and \texttt{maybe-merge}, to the standard \texttt{assign-update} procedure. After the \texttt{assign} and \texttt{update} steps, the algorithm calls  \texttt{maybe-split}, which uses the subclusters and subcentroids to determine whether any cluster should be split. If no clusters are split, it proceeds with \texttt{maybe-merge}. In the case of a split, each constituent subcluster is promoted to a full cluster, and a new set of subclusters and subcentroids is initialised within each of them, following the k++-means initialisation method \cite{arthur2006k}. If two clusters are merged, their subclusters are discarded, and the clusters themselves are demoted to become two subclusters inside a new cluster that is their union. \method is initialised with just a single cluster containing all data points (and its two sub-clusters), and then cycles between \texttt{assign}, \texttt{update}, \texttt{maybe-split} and \texttt{maybe-merge} until the assignments remain unchanged for a full cycle. (In practice, for speed, we terminate if the cost has improved by $<2$ in the past 5 cycles. These are not core parameters of the algorithm, and can easily be omitted, in which the runtime is $\sim$30\% longer.) In this way, it simultaneously optimises~$k$ \emph{and} the standard $k$-means objective, with respect to Equation~\eqref{eq:mdl-objective}. 

\subsubsection{\texttt{Maybe-Split} Step}
This method (Algorithm 2) checks whether each cluster should be split into two. A naive approach would involve computing Equation~\eqref{eq:mdl-objective} for the current parameters and again with the given cluster replaced with its two subclusters, splitting if the latter is smaller. However, we can perform a faster, equivalent check by simply measuring the difference in cost.
If there are currently $k$ clusters, splitting would increase the index cost of each point by $\log(k+1) - \log(k) \approx 1/(k+1)$. It would also decrease the residual cost by $Q(S) - (Q(S_1) + Q(S_2))$, where $S$ is the original cluster and $S_1, S_2$ are its subclusters. To determine whether a split is beneficial, we compute $Q(S) - (Q(S_1) + Q(S_2))$ for every cluster. If any value exceeds $2N/(k+1)$, the cluster with the largest difference is split. 

\subsubsection{\texttt{Maybe-Merge} Step}
This method (Algorithm 3) checks whether a pair of clusters should be merged. To avoid the time taken to compare every pair, we compare only the closest pair of centroids. Analogously to \texttt{maybe-split}, the potential change from merging is $\frac{1}{2}(Q(S) - (Q(S_1) + Q(S_2))) - N/k$, where  $S_1$ and $S_2$ are the two clusters with the closest centroids, and $S = S_1 \cup S_2$. If this value is positive, then $S_1$ and $S_2$ are merged, and become the new subclusters inside the new cluster $S$.

\begin{figure}[t]
\centering
\begin{minipage}{0.5\textwidth}
    \begin{algorithm}[H]
    \caption{\texttt{Maybe-Split} Procedure}
    \label{alg:maybe-split}
    \begin{algorithmic}[1]\small
    \Procedure{MaybeSplit}{$X, \mu, C, \mu_s, C_s$}
        \State $\text{best\_costchange} \gets \textproc{MDLCost}(X, \mu, C)$
        \State $\text{split\_at} \gets -1$
        
        \For{$i \in \{0, \ldots, |\mu|\}$}
            \State $subc1, subc2 \gets C_s[i]$
            \State $submu1, submu2 \gets \mu_s[i]$
            \State $costchange = + \sum_{x \in submu1} (x-subc1)^2 + \sum_{x \in submu2} (x-subc2)^2 - \sum_{x \in C[i]} (x-\mu[i])^2 + |X| / (|\mu| + 1)$
            
            \If{$\text{costchange} < \text{best\_costchange}$}
                \State $\text{best\_costchange} \gets \text{costchange}$
                \State $\text{split\_at} \gets i$
            \EndIf
        \EndFor
        
        \If{$\text{best\_costchange} < 0$}
            \State $\mu,C~\gets~\textproc{Split}(\mu,C,\mu_s, C_s,\text{split\_at})$
            \State $\text{new}~\gets~\textproc{InitSubcentroids}(\mu_s[\text{split\_at}])$
            \State \mbox{$\mu_s\gets\mu_s[:\text{split\_at}]+\text{new}+\mu_s[\text{split\_at}:]$~}
        \EndIf
        
        \State \Return $\mu, C, \mu_s, C_s, \text{split\_at} \geq 0$
    \EndProcedure
    \end{algorithmic}
    \end{algorithm}
\end{minipage}\hfill
 \begin{minipage}{0.46\textwidth}
\small
   \begin{algorithm}[H]
    \caption{\texttt{Maybe-Merge} Procedure}
    \begin{algorithmic}[1]
    \small
   \Procedure{MaybeMerge}{$X,\mu,C,\mu_s,C_s$}
        \State $i_1, i_2 \gets$ the indices of the closest pair of centroids
        \State $Z \gets C[i_1] \cup C[i_2]$
        \State $m_{\text{merged}} \gets \frac{1}{|Z|} \sum_{x \in Z} x$
        \State $mainQ \gets \sum_{z \in Z} (z-m_\text{merged})^2$
        \State $subcQ \gets \sum_{x \in C[i_1]} (x - \mu[i_1])^2 + \sum_{x \in C[i_2]}(x - \mu[i_2])^2$
        \State $costchange \gets mainQ - subcQ  - N/|\mu|$
        
        \If{$\text{costchange} < 0$}
            \State $C \gets C$ with $C[i_1]$ replaced with $Z$ and $C[i_2]$ removed
            \State $\mu \gets \mu$ with $\mu[i_1]$ replaced with $m_{\text{merged}}$ and $\mu[i_2]$ removed
        \EndIf
        
        \State \Return $\mu, C$
    \EndProcedure
    \end{algorithmic}
    \end{algorithm}
\end{minipage}
\end{figure}

\paragraph{Proof of Convergence} 
We prove in Appendix \ref{app:cnvrg-proof} that \method is guaranteed to converge in finite time. This is an extension of the proof of convergence for $k$-means, and uses the fact that all four of the steps at each cycle--\texttt{assign}, \texttt{update}, \texttt{maybe-split}, and \texttt{maybe-merge}--can only decrease the cost function in \eqref{eq:mdl-objective}.

\section{Experimental Evaluation} \label{sec:experimental-eval}
We evaluate our clustering algorithm with three sets of experiments. Firstly, we use synthetic data where we control the true number of clusters and test whether the algorithm can correctly identify this true number. Secondly, we measure performance on labelled data, and compare the predicted cluster labels to the true class labels using supervised clustering metrics. Thirdly, we examine the runtime as a function of dataset size, and show that it scales well compared to existing methods. 

\subsection{Synthetic Data} \label{subsec:synthetic-data}
For a range of values of~$k$, we first use Bridson sampling to sample~$k$ centroids in $\mathbb{R}^2$ near the origin with a minimum inter-point distance of $d$. Then we sample 1000/$k$ points 
from a multivariate normal distribution, with unit variance, centred at each centroid. Examples of this synthetic data with varying $d$ are shown in Figure \ref{fig:synth-examples}. We then run \method, and comparison methods, on these 1,000 points and compare the number of clusters it finds to $k$. We repeat this 10 times each for each $(k,d) \in \{1, \dots, 50\} \times \{2,3,4,5\}$. For each value of $d$, there are 10 examples each of 50 different values of~$k$. We compute the accuracy, i.e fraction of these 500 examples with perfectly correct prediction of $k$, and also the mean squared error (MSE) from the predicted~$k$ to the true~$k$.

Table~\ref{tab:synth-results} presents the results. As can be seen, \method consistently outperforms the baseline algorithms in inferring the value of $k$. Unsurprisingly, its performance improves as the distance between centroids increases, and notably, the accuracy gap between \method and the baselines also widens under these conditions. \method reaches near perfect accuracy in the highly separable setting, c.f. the next highest of HDBSCAN at 58\%. Figure~\ref{fig:synth-examples} contains visualisations of the predictions of \method.

\begin{figure}[t]
    \centering
    \includegraphics[width=0.3\linewidth]{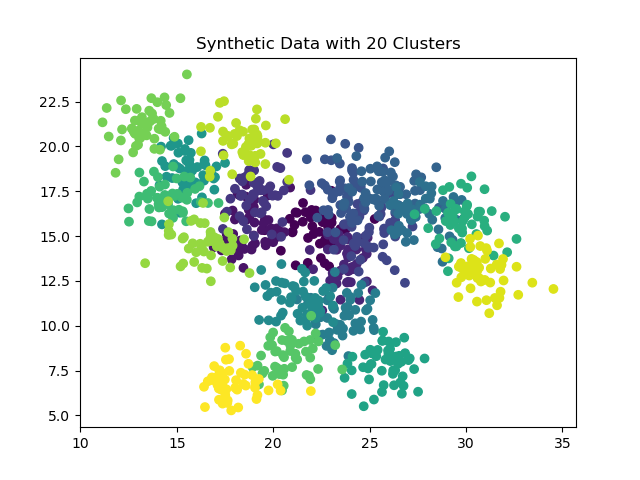}
    \includegraphics[width=0.3\linewidth]{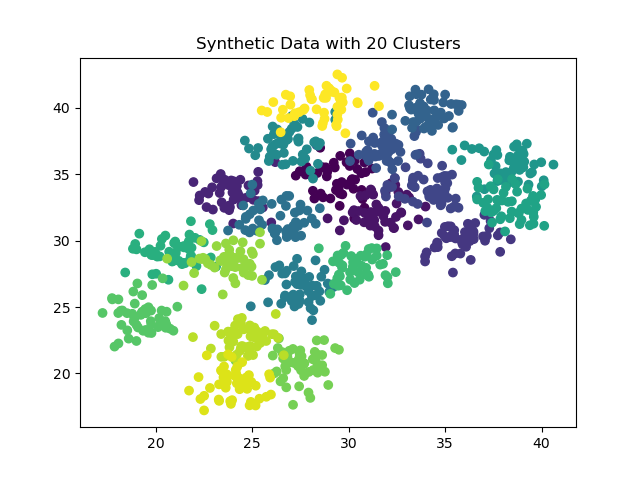}
    \includegraphics[width=0.3\linewidth]{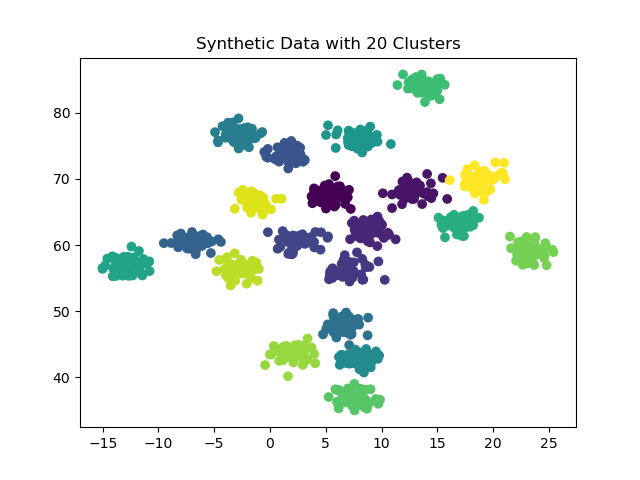}
    \caption{Synthetic data of standard, multivariate Normal clusters, with varying degrees of separation. Left: weak separation, inter-centroid distance constrained to~$\geq 2$, \method is 9\% accurate in inferring~$k$ and baselines are~$\leq$4.4\%. Middle: inter-centroid distance constrained to $\geq~3$, \method is~25\% accurate in inferring~$k$ and baselines are~$\leq$16\%. Right: strong separation, inter-centroid distance constrained to~$\geq 5$, \method is 99\%~accurate in inferring~$k$ and baselines are~$\leq$57\%.}
    \label{fig:synth-examples}
\end{figure}

\begin{table}[]
    \centering
    \caption{Performance on predicting the true number of clusters in synthetic data for varying degrees of cluster separation. \method performs consistently the best, and reaches near perfect accuracy when $d=5$.}
    \label{tab:synth-results}
\resizebox{\textwidth}{!}{
    \begin{tabular}{lrrrrrrrr}
\toprule
& \multicolumn{4}{c}{mse} & \multicolumn{4}{c}{acc} \\
\midrule
& \method & dbscan & hdbscan & xmeans & \method & dbscan & hdbscan & xmeans \\
\midrule
synthetic d=2 & 306.35 & 126.10 & 414.73 & 721.54 & 9.00 & 4.40 & 4.00 & 3.80 \\
synthetic d=3 & 81.70 & 252.41 & 116.35 & 681.97 & 25.40 & 5.40 & 7.80 & 16.00 \\
synthetic d=4 & 1.94 & 244.28 & 28.34 & 630.13 & 68.00 & 7.60 & 21.40 & 22.20 \\
synthetic d=5 & 0.00 & 238.18 & 12.83 & 623.99 & 99.80 & 6.60 & 57.60 & 25.40 \\
\bottomrule
\end{tabular}
}
\end{table}

\begin{table}[t]
 \caption{Accuracy on labelled datasets, using supervised metrics. Both with respect to predicting the number of clusters, $k$, and the accuracy with respect to the class labels, (ACC, NMI and ARI), \method significantly outperforms all existing models that do not know the true number of clusters. As an upper-bound, we include results from two standard clustering models that do know $k$, $k$-means and GMM. \method performs very close or even equivalent to these upper bounds, despite not having $k$  or any other parameter, specified. '-1' indicates the algorithm still had not converged after 10hrs.}
    \label{tab:main-results}
\resizebox{\textwidth}{!}{
\begin{tabular}{llllllll}
\toprule
 &  & ACC & ARI & NMI & $k$ & \makecell{Num \\ Outliers}\ & Runtime (s) \\
\midrule
& affinity & -1.00 & -1.00 & -1.00 & -1.00 & 0 & 36000+ \\
 & meanshift & 77.23 & 63.42 & 80.28 & 7.00 & 0 & 463.39 \\
 & DBSCAN & 68.75 & 54.84 & 77.66 & 6.00 & 1 & 2.95 \\
 & HDBSCAN & 79.73 & 84.84 & 70.70 & 1214.00 & 11190 & 46.24 (0.79) \\
 \multirow[t]{9}{*}{\shortstack{MNIST \\ domain = images \\ n classes = 10}} & OPTICS & 0.32 & 0.06 & 36.95 & 20696.00 & 11122 & 225.34 \\
 & xmeans & 40.89 (4.75) & 34.77 (4.70) & 64.74 (2.25) & 118.80 (23.59) & 0 & 16.40 (6.41) \\
 & \method & \textbf{91.26 (3.56)} & \textbf{84.99 (2.97)} & \textbf{87.44 (1.14)} & 10.90 (0.32) & 0 & 3.38 (0.39) \\
\hdashline
 & kmeans & 84.12 (8.13) & 79.64 (6.92) & 85.31 (2.73) & 10.00 & 0 & 0.09 (0.05) \\
 & GMM & 86.29 (7.05) & 82.61 (6.89) & 87.41 (2.74) & 10.00 & 0 & 0.78 (0.26) \\ \hline 
\multicolumn{8}{c}{~} \\ \toprule
& affinity & 61.29 & 49.37 & 75.94 & 25.00 & 0 & 148.15 \\
 & meanshift & 74.55 & 63.88 & 78.03 & 6.00 & 0 & 27.93 \\
 & DBSCAN & 80.46 & 71.00 & 83.51 & 7.00 & 0 & 0.13 \\
 & HDBSCAN & 77.49 & 82.16 & 79.09 & 108.00 & 829 & 0.80 (0.01) \\
 \multirow[t]{9}{*}{\shortstack{USPS \\ domain = images \\ n classes = 10}} & OPTICS & 1.88 & 0.46 & 43.95 & 2681.00 & 1617 & 11.68 \\
 & xmeans & 55.12 (5.03) & 46.09 (4.23) & 73.27 (1.71) & 41.00 (8.12) & 0 & 1.00 (0.43) \\
 & \method & \textbf{88.68 (0.00)} & \textbf{81.57 (0.00)} & \textbf{87.14 (0.00)} & 8.00 (0.00) & 0 & 0.80 (0.26) \\ 
 \hdashline
 & kmeans & 79.72 (8.15) & 78.68 (6.66) & 86.41 (2.12) & 10.00 & 0 & 0.03 (0.03) \\
 & GMM & 81.72 (6.76) & 80.27 (5.68) & 86.84 (1.82) & 10.00 & 0 & 0.11 (0.01) \\
\cline{1-8}
\multicolumn{8}{c}{~} \\ \toprule
& affinity & 41.49 & 27.73 & 57.58 & 46.00 & 0 & 233.19 \\
 & meanshift & 55.98 & 36.05 & 58.67 & 6.00 & 0 & 103.31 \\
 & DBSCAN & 26.09 & 3.70 & 22.00 & 3.00 & 1 & 0.22 \\
 & HDBSCAN & 51.62 & 46.01 & 55.52 & 402.00 & 4193 & 1.09 (0.02) \\
 \multirow[t]{9}{*}{\shortstack{Imagenet \\ (subset) \\ domain = images \\ n classes = 10}} & OPTICS & 1.35 & 0.28 & 40.22 & 3916.00 & 2262 & 18.11 \\
 & xmeans & 39.21 (3.19) & 25.53 (3.03) & 55.92 (0.88) & 70.00 (8.54) & 0 & 2.76 (0.68) \\
 & \method & \textit{66.18 (1.55)} & \textit{46.42 (1.45)} & \textit{60.20 (0.86)} & 6.40 (0.70) & 0 & 0.94 (0.34) \\
\hdashline
 & kmeans & \textbf{69.79 (5.18)} & \textbf{55.08 (4.65)} & \textbf{64.16 (2.81)} & 10.00 & 0 & 0.05 (0.04) \\
 & GMM & 66.85 (6.11) & 53.97 (5.44) & 64.01 (2.76) & 10.00 & 0 & 0.30 (0.09) \\ \hline
\multicolumn{8}{c}{~} \\ \toprule
& affinity & -1.00 & -1.00 & -1.00 & -1.00 & 0 & 36000+ \\
 & meanshift & 52.08 & 17.89 & 59.53 & 16.00 & 0 & 1205.21 \\
 & DBSCAN & 50.60 & 10.52 & 61.59 & 20.00 & 0 & 2.22 \\
 & HDBSCAN & 65.35 & 67.68 & 67.12 & 2453.00 & 24170 & 53.98 (7.86) \\
 \multirow[t]{9}{*}{\shortstack{Speech \\ Commands \\ domain = audio \\ n classes = 36}} & OPTICS & 0.74 & 0.13 & 47.76 & 28932.00 & 17163 & 325.78 \\
 & xmeans & 26.32 (7.78) & 14.33 (8.22) & 47.70 (18.56) & 190.10 (161.25) & 0 & 16.00 (13.01) \\
 & \method & \textit{68.73 (1.57)} & \textit{48.43 (2.49)} & \textit{70.22 (0.67)} & 26.50 (0.97) & 0 & 20.98 (5.22) \\
\hdashline
 & kmeans & \textbf{71.08 (1.72)} & \textbf{57.78 (1.67)} & 72.67 (0.47) & 36.00 & 0 & 0.30 (0.06) \\
 & GMM & 71.04 (1.27) & 56.12 (1.63) & \textbf{72.90 (0.42)} & 36.00 & 0 & 6.46 (0.89) \\
\cline{1-8}
\multicolumn{8}{c}{~} \\ \toprule
& affinity & 40.36 & 23.94 & \textit{48.27} & 75.00 & 0 & 597.33 \\
 & meanshift & 21.50 & 9.19 & 30.45 & 9.00 & 0 & 275.23 \\
 & DBSCAN & 16.40 & 1.98 & 18.59 & 12.00 & 0 & 0.40 \\
 & HDBSCAN & 30.08 & 24.05 & 47.72 & 664.00 & 6153 & 3.27 (0.03) \\
 \multirow[t]{9}{*}{\shortstack{20 NG \\ domain = text \\ n classes = 20}} & OPTICS & 1.60 & 0.27 & 43.73 & 5529.00 & 3211 & 28.84 \\
 & xmeans & 30.01 (10.66) & 15.48 (8.18) & 37.78 (19.83) & 107.60 (56.16) & 0 & 4.78 (2.51) \\
 & \method & \textit{42.33 (1.14)} & \textit{26.08 (0.44)} & 46.61 (0.67) & 11.20 (0.42) & 0 & 2.46 (0.96) \\
\hdashline
 & kmeans & 46.73 (1.47) & 33.68 (0.53) & 50.42 (0.48) & 20.00 & 0 & 0.07 (0.06) \\
 & GMM & \textbf{47.03 (1.22)} & \textbf{33.71 (0.78)} & \textbf{50.68 (0.50)} & 20.00 & 0 & 0.86 (0.19) \\
\cline{1-8}
\multicolumn{8}{c}{~} \\ \toprule
 & affinity & 36.60  & 18.12  & 40.75  & 46.00  & 0.00  & 31.09  \\
 & meanshift & 41.30  & 12.82  & 35.96  & 15.00  & 0.00  & 25.09  \\
 & DBSCAN & 37.65  & 11.51  & 39.23  & 27.00  & 0.00  & 0.05  \\
  & HDBSCAN & 18.40  & 5.90  & 45.39  & 321.00  & 1275.00  & 0.26 (0.73) \\
 \multirow[t]{9}{*}{\shortstack{MSRVTT \\ domain = video \& text \\ n classes = 20}}  & OPTICS & 4.30  & 0.70  & \textbf{46.86}  & 1763.00  & 1091.00  & 5.02  \\
 & xmeans & 31.78 (580.54) & 11.64 (965.93) & 24.36 (1441.93) & 32.40 (3924.62) & 0.00  & 0.39 (40.48) \\
 & \method & \textbf{44.10 (136.25)} & \textbf{25.75 (65.28)} & 38.16 (33.06) & 18.10 (87.56) & 0.00  & 2.57 (40.59) \\
\hdashline
 & kmeans & 40.07 (108.95) & 25.35 (128.11) & 38.43 (62.75) & 20.00  & 0.00  & 0.04 (1.01) \\
 & GMM & 41.41 (193.57) & 25.28 (101.87) & 38.44 (49.71) & 20.00  & 0.00  & 0.31 (9.16) \\

\bottomrule
\end{tabular}
}
\vspace{-2em}
   
\end{table}

\subsection{Labelled Datasets}
We evaluate \method on six datasets spanning multiple modalities. \textbf{MNIST} and \textbf{USPS} both consist of handwritten digit images from 0--9, \textbf{Imagenette} \cite{imagenette} is a subset of ImageNet with ten image classes, \textbf{Speech Commands} consists of short spoken words for command recognition in 36 classes, \textbf{20 NewsGroups} is a dataset of text documents across twenty topics, and \textbf{MSRVTT} consists of video clips paired with natural language captions in 20 categories. For all datasets, we dimensionally reduce with UMAP \cite{mcinnes2018umap} (min-dist=0, n-neighbours=10). For ImageNet we first apply CLIP \cite{pmlr-v139-radford21a} and for 20 Newsgroups we first take features from Llama-3.1 \cite{touvron2023llama} (mean across all tokens). For MSRVTT, we first take CLIP features of both the video and text (mean across all frames and tokens). As well as tracking the predicted number of classes, we assess partition quality by comparing to the ground truth partition arising from the class labels using three metrics: clustering accuracy (ACC), adjusted rand index (ARI), and normalised mutual information (NMI), as defined, for example in \cite{mahon2024hard}.

As baselines for clustering with unknown~$k$, we compare to the following:  affinity propagation \mbox{(damping factor = 0.5)}, mean shift (bandwidth = median of pairwise distances), DBSCAN (\texttt{eps=0.5}, \texttt{min-samples} = 5), HDBSCAN (\texttt{eps}=0.5, \texttt{min-samples} = 5), OPTICS, ($\xi=0.05$, \texttt{min-samples}=5), and XMeans (\texttt{kmax}=$\sqrt{\text{dataset size}}$). These methods are all described in Section \ref{sec:related-work} (see also Table~\ref{tab:clustering_params}). For XMeans, in the absence of any guidance on selecting \texttt{kmax}, we select it in this way because it is the value at which the information content is roughly equal between the index cost and the residual cost. All other parameter values are the sci-kit learn\footnote{\url{https://scikit-learn.org/stable/}} defaults. 

Our results are summarised in Table~\ref{tab:main-results}. \method consistently outperforms all other methods that do not require setting $k$. Meanshift and DBSCAN tend to underestimate $k$, while affinity propagation, HDBSCAN, XMeans, and OPTICS tend to overestimate it, often by a factor of~10 or more. \method, on average, slightly underestimates~$k$, but is much closer than existing methods. It is also much more accurate with respect to the clustering metrics, on some datasets (MNIST, USPS) even performing on par with $k$-means and GMM, which have the \emph{true} value of $k$~specified. 

Occasionally (20-NG, MSRVTT), one of the existing methods gets a high NMI score. However, we observe that they also vastly overpredict~$k$ in these cases, which means that there are very different numbers of classes in the true and predicted partitions. This can cause NMI to give unreliable results as the entropy in the latter is then unnaturally high. For existing methods, it is quite likely that one could obtain better results by manually tuning the parameters. We find that it is possible to get almost any value of $k$ by such tuning (see Appendix \ref{app:dbscan-vary-k}), but the focus of the present paper is on cases in which the user does not know the true value of~$k$. In other words, they do not have a ground truth against which to tune these parameters, and instead have to use the defaults. Table \ref{tab:main-results} shows that \method is a much better choice in such cases. Figure \ref{fig:cluster-visualisation} shows example visualisations of the predicted clusters for ImageNet and Speech Commands.

\begin{figure}[t]
    \centering
    \includegraphics[width=0.48\linewidth]{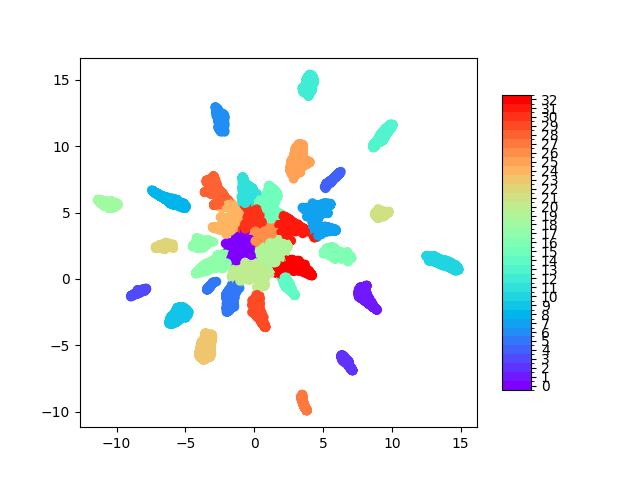}
    \includegraphics[width=0.48\linewidth]{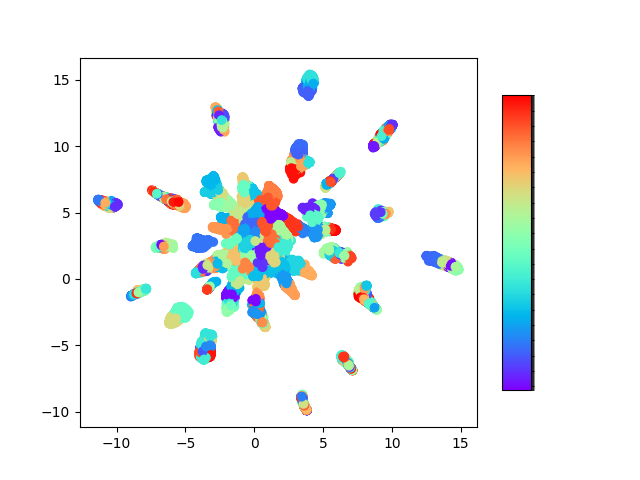}
    \caption{Clusters predicted by \method for the UMAP representations on the Speech Commands dataset, by \method (left) and XMeans (right). \method predicts 33 classes and XMeans predicts~315, vs.~36 in the annotations. }
    \label{fig:cluster-visualisation}
\end{figure}

\subsection{Comparison to Sweeping $k$} \label{subsec:sweep-k-comparison}
A common approach when clustering with unknown $k$ is to train $k$-means models with multiple values of $k$, compute some external model-selection criterion, commonly the Bayesian information criterion \cite{schwarz1978estimating} for each, and select whichever $k$ gives the lowest BIC \cite{wessman2012temperament,zhang2014functional, lancaster2019reconceptualizing, salmanpour2022longitudinal}.
Table \ref{tab:comparison-to-sweep-k} shows the performance of this common approach compared to \method. As we are simulating the scenario in which there is no knowledge of $k$, we sweep in increments of 10\% up to the dataset size. Sweeping plus BIC selection tends to favour very high values of $k$, generally 4-5x the number of annotated classes. It is also between 10 and 50x slower than \method. 

\begin{table}[t]
 \caption{Comparison of \method with the common approach of sweeping $k$ and selecting with BIC. \method is consistently faster and more accurate.} \label{tab:comparison-to-sweep-k}

\resizebox{\textwidth}{!}{
\begin{tabular}{llllllll}
\toprule
 &  & ACC & ARI & NMI & NC & Runtime (s) \\
\midrule
\multirow[t]{9}{*}{MNIST} & sweepkm & 12.86 & 8.17 & 56.01 & 25.00 & 148.15 \\
 & \method & \textbf{91.26 (3.56)} & \textbf{84.99 (2.97)} & \textbf{87.44 (1.14)} & 10.90 (0.32) & \textbf{3.38 (0.39)} \\
\cmidrule{1-7}
\multirow[t]{9}{*}{USPS}  & sweepkm & 32.36 (0.81) & 21.77 (0.77) & 65.20 (0.40) & 68.40 (3.10) & 11.21 (0.73) \\
& \method & \textbf{88.68 (0.00)} & \textbf{81.57 (0.00)} & \textbf{87.14 (0.00)} & 8.00 (0.00) & \textbf{0.80 (0.26)} \\ 
\cmidrule{1-7}
\multirow[t]{9}{*}{ImageNet (subset)} & sweepkm & 8.16 (0.26) & 1.18 (0.05) & 7.62 (0.05) & 83.20 (4.13) & 19.54 (0.23)\\
& \method & \textbf{66.18 (1.55)} & \textbf{46.42 (1.45)} & \textbf{60.20 (0.86)} & 6.40 (0.70) & \textbf{0.94 (0.34)} \\
\cmidrule{1-7}
\multirow[t]{9}{*}{Speech Commands} & sweepm & 32.19 (1.27) & 20.10 (0.90) & 62.29 (0.30) & 239.50 (12.12) & 951.58 (11.31) \\
 & \method & \textbf{68.73 (1.57)} & \textbf{48.43 (2.49)} & \textbf{70.22 (0.67)} & 26.50 (0.97) & \textbf{20.98 (5.22)} \\
\cmidrule{1-7}
\multirow[t]{9}{*}{20 NG} & sweepkm & 32.75 (0.54) & 17.44 (0.54) & 46.84 (0.17) & 107.30 (5.83) & 36.61 (1.28) \\
 & \method & \textbf{42.33 (1.14)} & \textbf{26.08 (0.44)} & 46.61 (0.67) & 11.20 (0.42) & \textbf{2.46 (0.96)} \\
\cmidrule{1-7}
\multirow[t]{9}{*}{MSRVTT} & sweepkm & 27.50 (89.64) & 12.24 (53.57) & \textbf{41.36 (18.64)} & 91.60 (464.76) & 7.33 (20.57) \\
 & \method & \textbf{44.10 (136.25)} & \textbf{25.75 (65.28)} & 38.16 (33.06) & 18.10 (87.56) & \textbf{2.57 (40.59)} \\
 \bottomrule
\end{tabular}
 }
 \end{table}

\subsection{Runtime Analysis} \label{subsec:runtime-analysis}
The runtimes from Table \ref{tab:main-results} already give an indication of the speed of \method compared with existing methods. To examine this further, and in particular how it depends on dataset size, we use subsets of varying size from the largest of the datasets from Table \ref{tab:main-results}: Speech Commands which has 99,000 data points. 
Figure~\ref{fig:runtime-by-nsamples} shows the runtime of \method, compared to the fastest baselines, on subsets of size $1,000, 2,000, \dots, 99,000$. 
The fastest at all sizes is $k$-means, which remains well under 1s even for 99,000 samples. 
The next is DBSCAN, rising to \ {} $\sim$3s, then the GMM{} $\sim$5s and \method \ {}$\sim$8s. 

\begin{figure}[t]
    \centering
    \includegraphics[width=.8\linewidth]{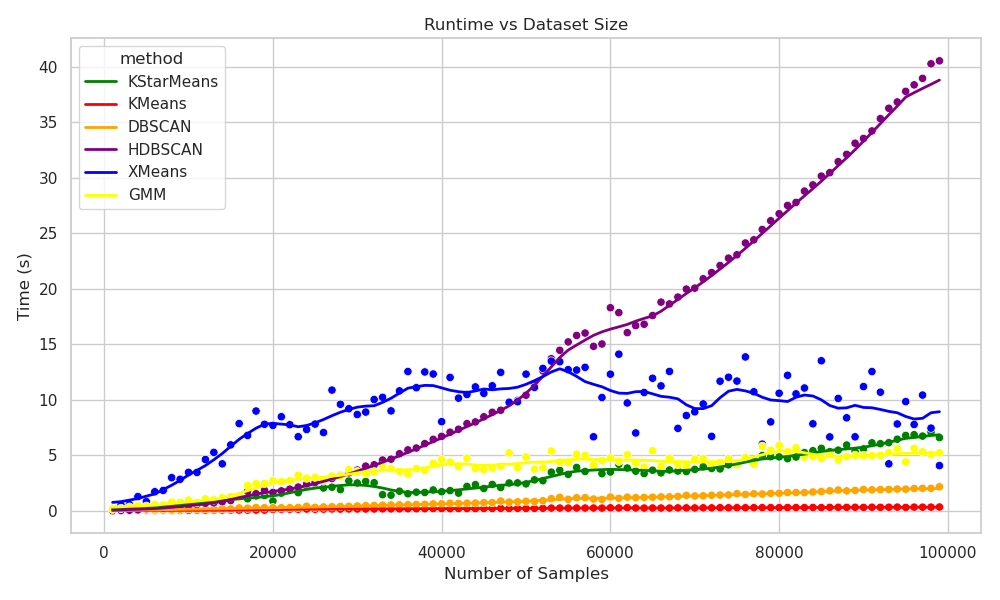}
    \caption{Runtime as a function of dataset size. Each point represents the mean runtime of 10 randomly sampled subsets from the Speech Commands dataset of the given size. The line tracks windowed averages. \method has a similar speed to xmeans, faster than HDBSCAN, especially for larger input sizes, and slower than kmeans, DBSCAN and GMM.}
    \label{fig:runtime-by-nsamples}
\end{figure}

HDBSCAN is efficient for small samples, faster than \method and GMM, but increases much faster, and by 99,000 samples its runtime is 6x that of \method. XMeans is the most erratic, by far the slowest for small sample sizes, but increasing very little or even decreasing, and ending up close to \method. The reason for the surprising decrease could be that XMeans predicts fewer clusters for larger datasets. It could also be related to the optimised C-Engine that the public XMeans code makes use of\footnote{\url{https://pyclustering.github.io}}. Note that Figure \ref{fig:runtime-by-nsamples} shows only the fastest five algorithms. Mean-shift, affinity propagation and OPTICS are all substantially slower and would be off the chart if included. 

\vspace{-0.5em}
\section{Limitations} \label{sec:limitations}
\vspace{-0.5em}
\method lacks a theoretical guarantee of worst-case runtime. Also, it assumes roughly spherical clusters, which often does not hold in practice. However, both of these criticisms also apply to the original $k$-means \cite{mahajan2012planar}, and they have not prevented it being the most used clustering algorithm.

\vspace{-0.5em}
\section{Conclusion} \label{sec:conclusion}
\vspace{-0.5em}
This paper presented a new clustering algorithm, \method, which can be applied in the absence of knowing $k$ and does not require setting any other parameters such as thresholds. We prove that \method is guaranteed to converge, and we show empirically on synthetic data that it can more accurately infer $k$ than comparison methods, and with near-perfect accuracy for sufficiently separated centroids. We then test it on six labelled datasets spanning image, text, audio and video domains, and show that it is significantly more accurate than existing methods in terms of standard clustering metrics. We also compare it to the standard practice of sweeping~$k$ in $k$-means and selecting with a model selection criterion. Finally, we demonstrate how its runtime scales with dataset size, and show that it is faster, and scales better than the majority of existing methods. \method can be useful in cases where the user has large uncertainty as to the appropriate value of~$k$. 

\bibliography{bibliography}

\begin{thebibliography}{45}
\providecommand{\natexlab}[1]{#1}
\providecommand{\url}[1]{\texttt{#1}}
\expandafter\ifx\csname urlstyle\endcsname\relax
  \providecommand{\doi}[1]{doi: #1}\else
  \providecommand{\doi}{doi: \begingroup \urlstyle{rm}\Url}\fi

\bibitem[Ankerst et~al.(1999)Ankerst, Breunig, Kriegel, and Sander]{ankerst1999optics}
Mihael Ankerst, Markus~M Breunig, Hans-Peter Kriegel, and J{\"o}rg Sander.
\newblock Optics: ordering points to identify the clustering structure.
\newblock In \emph{Proceedings of the 1999 ACM SIGMOD international conference on Management of data}, pages 49--60, 1999.

\bibitem[Arthur and Vassilvitskii(2006)]{arthur2006k}
David Arthur and Sergei Vassilvitskii.
\newblock k-means++: The advantages of careful seeding.
\newblock Technical report, Stanford, 2006.

\bibitem[Campello et~al.(2013)Campello, Moulavi, and Sander]{campello2013density}
Ricardo~JGB Campello, Davoud Moulavi, and Joerg Sander.
\newblock Density-based clustering based on hierarchical density estimates.
\newblock \emph{Pacific-Asia conference on knowledge discovery and data mining}, pages 160--172, 2013.

\bibitem[Caron et~al.(2018)Caron, Bojanowski, Joulin, and Douze]{caron2018deep}
Mathilde Caron, Piotr Bojanowski, Armand Joulin, and Matthijs Douze.
\newblock Deep clustering for unsupervised learning of visual features.
\newblock In \emph{Proceedings of the European conference on computer vision (ECCV)}, pages 132--149, 2018.

\bibitem[Comaniciu and Meer(2002)]{comaniciu2002mean}
Dorin Comaniciu and Peter Meer.
\newblock Mean shift: A robust approach toward feature space analysis.
\newblock \emph{IEEE Transactions on Pattern Analysis and Machine Intelligence}, 24\penalty0 (5):\penalty0 603--619, 2002.

\bibitem[Dempster et~al.(1977)Dempster, Laird, and Rubin]{dempster1977maximum}
Arthur~P Dempster, Nan~M Laird, and Donald~B Rubin.
\newblock Maximum likelihood from incomplete data via the em algorithm.
\newblock \emph{Journal of the royal statistical society: series B (methodological)}, 39\penalty0 (1):\penalty0 1--22, 1977.

\bibitem[Ester et~al.(1996)Ester, Kriegel, Sander, and Xu]{ester1996density}
Martin Ester, Hans-Peter Kriegel, J{\"o}rg Sander, and Xiaowei Xu.
\newblock A density-based algorithm for discovering clusters in large spatial databases with noise.
\newblock In \emph{Proceedings of the Second International Conference on Knowledge Discovery and Data Mining}, pages 226--231. AAAI Press, 1996.

\bibitem[Fini et~al.(2023)Fini, Astolfi, Alahari, Alameda-Pineda, Mairal, Nabi, and Ricci]{fini2023semi}
Enrico Fini, Pietro Astolfi, Karteek Alahari, Xavier Alameda-Pineda, Julien Mairal, Moin Nabi, and Elisa Ricci.
\newblock Semi-supervised learning made simple with self-supervised clustering.
\newblock In \emph{Proceedings of the IEEE/CVF conference on computer vision and pattern recognition}, pages 3187--3197, 2023.

\bibitem[Frey and Dueck(2007)]{frey2007clustering}
Brendan~J Frey and Delbert Dueck.
\newblock Clustering by passing messages between data points.
\newblock \emph{Science}, 315\penalty0 (5814):\penalty0 972--976, 2007.

\bibitem[Graham et~al.(1994)Graham, Knuth, and Patashnik]{graham1994concrete}
Ronald~L. Graham, Donald~E. Knuth, and Oren Patashnik.
\newblock \emph{Concrete Mathematics: A Foundation for Computer Science}.
\newblock Addison-Wesley, 2nd edition, 1994.
\newblock ISBN 9780201558029.
\newblock URL \url{https://archive.org/details/B-001-002-135}.

\bibitem[Hebdon et~al.(2025)Hebdon, Ortega, Orlove, Wheeler, Pham, Nguyen, Gladman, and Waldrop]{hebdon2025dog}
Nicholas Hebdon, Alexa Ortega, Alexander Orlove, Nichole Wheeler, Mia Pham, Vivian Nguyen, Justin Gladman, and Lindsay~D. Waldrop.
\newblock Dog skull shape challenges assumptions of performance specialization from selective breeding.
\newblock \emph{Science Advances}, 11\penalty0 (5):\penalty0 eadq9590, 2025.
\newblock \doi{10.1126/sciadv.adq9590}.
\newblock URL \url{https://www.science.org/doi/abs/10.1126/sciadv.adq9590}.

\bibitem[Howard and Gugger(2019)]{imagenette}
Jeremy Howard and Sylvain Gugger.
\newblock Imagenette.
\newblock \url{https://github.com/fastai/imagenette}, 2019.

\bibitem[Ishioka(2000)]{ishioka2000extended}
Tsunenori Ishioka.
\newblock Extended k-means with an efficient estimation of the number of clusters.
\newblock \emph{Ouyou toukeigaku}, 29\penalty0 (3):\penalty0 141--149, 2000.

\bibitem[Kisi et~al.(2025)Kisi, Heddam, Parmar, Petroselli, K{\"u}lls, and Zounemat-Kermani]{kisi2025integration}
Ozgur Kisi, Salim Heddam, Kulwinder~Singh Parmar, Andrea Petroselli, Christoph K{\"u}lls, and Mohammad Zounemat-Kermani.
\newblock Integration of gaussian process regression and k means clustering for enhanced short term rainfall runoff modeling.
\newblock \emph{Scientific Reports}, 15\penalty0 (1):\penalty0 7444, 2025.

\bibitem[Lancaster and Camarata(2019)]{lancaster2019reconceptualizing}
Hope~S Lancaster and Stephen Camarata.
\newblock Reconceptualizing developmental language disorder as a spectrum disorder: issues and evidence.
\newblock \emph{International Journal of Language \& Communication Disorders}, 54\penalty0 (3):\penalty0 319--327, 2019.

\bibitem[Liu et~al.(2023{\natexlab{a}})Liu, Gulati, Narang, Pang, and Wu]{liu2023dinosr}
Alexander~H. Liu, Anmol Gulati, Shubham Narang, Ruoming Pang, and Yonghui Wu.
\newblock Dinosr: Self-distillation and online clustering for self-supervised speech representation learning.
\newblock In \emph{Advances in Neural Information Processing Systems}, 2023{\natexlab{a}}.
\newblock URL \url{https://proceedings.neurips.cc/paper_files/paper/2023/file/b6404bf461c3c3186bdf5f55756af908-Paper-Conference.pdf}.

\bibitem[Liu et~al.(2023{\natexlab{b}})Liu, Liu, Liang, Tu, Wang, Zhou, and Liu]{liu2023deep}
Meng Liu, Yue Liu, Ke~Liang, Wenxuan Tu, Siwei Wang, Sihang Zhou, and Xinwang Liu.
\newblock Deep temporal graph clustering.
\newblock \emph{arXiv preprint arXiv:2305.10738}, 2023{\natexlab{b}}.

\bibitem[Lloyd(1982)]{lloyd1982least}
Stuart Lloyd.
\newblock Least squares quantization in pcm.
\newblock \emph{IEEE Transactions on Information Theory}, 28\penalty0 (2):\penalty0 129--137, 1982.

\bibitem[Ma et~al.(2023)Ma, Liu, Chen, and Yu]{ma2023cam}
Jie Ma, Ling Liu, Cong Chen, and Zhou Yu.
\newblock Structured federated learning through clustered additive modeling.
\newblock In \emph{Advances in Neural Information Processing Systems}, 2023.
\newblock URL \url{https://proceedings.neurips.cc/paper_files/paper/2023/file/8668fdc7b2ddf55a0e235824c66f2eee-Paper-Conference.pdf}.

\bibitem[MacQueen(1967)]{macqueen1967classification}
James MacQueen.
\newblock Some methods for classification and analysis of multivariate observations.
\newblock 1\penalty0 (14):\penalty0 281--297, 1967.

\bibitem[Mahajan et~al.(2012)Mahajan, Nimbhorkar, and Varadarajan]{mahajan2012planar}
Meena Mahajan, Prajakta Nimbhorkar, and Kasturi Varadarajan.
\newblock The planar k-means problem is np-hard.
\newblock \emph{Theoretical computer science}, 442:\penalty0 13--21, 2012.

\bibitem[Mahon(2024)]{mahon2024towards}
Louis Mahon.
\newblock Towards a universal method for meaningful signal detection.
\newblock \emph{arXiv preprint arXiv:2408.00016}, 2024.

\bibitem[Mahon(2025)]{mahon2025local}
Louis Mahon.
\newblock Local compositional complexity: How to detect a human-readable messsage.
\newblock \emph{arXiv preprint arXiv:2501.03664}, 2025.

\bibitem[Mahon and Lapata(2024)]{mahon-lapata-2024-modular}
Louis Mahon and Mirella Lapata.
\newblock A modular approach for multimodal summarization of {TV} shows.
\newblock In Lun-Wei Ku, Andre Martins, and Vivek Srikumar, editors, \emph{Proceedings of the 62nd Annual Meeting of the Association for Computational Linguistics (Volume 1: Long Papers)}, pages 8272--8291, Bangkok, Thailand, August 2024. Association for Computational Linguistics.
\newblock \doi{10.18653/v1/2024.acl-long.450}.
\newblock URL \url{https://aclanthology.org/2024.acl-long.450}.

\bibitem[Mahon and Lapata(2025)]{mahon2025parameter}
Louis Mahon and Mirella Lapata.
\newblock Parameter-free video segmentation for vision and language understanding.
\newblock \emph{arXiv preprint arXiv:2503.01201}, 2025.

\bibitem[Mahon and Lukasiewicz(2021)]{mahon2021selective}
Louis Mahon and Thomas Lukasiewicz.
\newblock Selective pseudo-label clustering.
\newblock In \emph{KI 2021: Advances in Artificial Intelligence: 44th German Conference on AI, Virtual Event, September 27--October 1, 2021, Proceedings 44}, pages 158--178. Springer, 2021.

\bibitem[Mahon and Lukasiewicz(2024{\natexlab{a}})]{mahon2024hard}
Louis Mahon and Thomas Lukasiewicz.
\newblock Hard regularization to prevent deep online clustering collapse without data augmentation.
\newblock In \emph{Proceedings of the AAAI Conference on Artificial Intelligence}, volume~38, pages 14281--14288, 2024{\natexlab{a}}.

\bibitem[Mahon and Lukasiewicz(2024{\natexlab{b}})]{mahon2024minimum}
Louis Mahon and Thomas Lukasiewicz.
\newblock Minimum description length clustering to measure meaningful image complexity.
\newblock \emph{Pattern Recognition}, 145:\penalty0 109889, 2024{\natexlab{b}}.

\bibitem[Markovitz et~al.(2020)Markovitz, Sharir, Friedman, Zelnik-Manor, and Avidan]{Markovitz_2020_CVPR}
Amir Markovitz, Gilad Sharir, Itamar Friedman, Lihi Zelnik-Manor, and Shai Avidan.
\newblock Graph embedded pose clustering for anomaly detection.
\newblock In \emph{Proceedings of the IEEE/CVF Conference on Computer Vision and Pattern Recognition (CVPR)}, June 2020.

\bibitem[McInnes et~al.(2018)McInnes, Healy, and Melville]{mcinnes2018umap}
Leland McInnes, John Healy, and James Melville.
\newblock {UMAP}: Uniform manifold approximation and projection for dimension reduction.
\newblock \emph{arXiv:1802.03426}, 2018.

\bibitem[Meyer et~al.(2025)Meyer, Diekmann, Hellmers, Hein, and Schumacher]{meyer2025uncovering}
Ole Meyer, Rebecca Diekmann, Sandra Hellmers, Andreas Hein, and Anna Schumacher.
\newblock Uncovering hidden insights in the chair rise performance of older adults using dynamic time warping and k-means clustering.
\newblock \emph{Scientific Reports}, 15\penalty0 (1):\penalty0 7654, 2025.

\bibitem[Miklautz et~al.(2024)Miklautz, Klein, Sidak, Leiber, Lang, Shkabrii, Tschiatschek, and Plant]{miklautz2024breaking}
Lukas Miklautz, Timo Klein, Kevin Sidak, Collin Leiber, Thomas Lang, Andrii Shkabrii, Sebastian Tschiatschek, and Claudia Plant.
\newblock Breaking the reclustering barrier in centroid-based deep clustering.
\newblock \emph{arXiv preprint arXiv:2411.02275}, 2024.

\bibitem[Mo et~al.(2024)Mo, Lu, Yu, Zhu, and Wang]{mo2024revisiting}
Yujie Mo, Zhihe Lu, Runpeng Yu, Xiaofeng Zhu, and Xinchao Wang.
\newblock Revisiting self-supervised heterogeneous graph learning from spectral clustering perspective.
\newblock \emph{Advances in Neural Information Processing Systems}, 37:\penalty0 43133--43163, 2024.

\bibitem[Ng et~al.(2001)Ng, Jordan, and Weiss]{ng2001spectral}
Andrew~Y Ng, Michael~I Jordan, and Yair Weiss.
\newblock On spectral clustering: Analysis and an algorithm.
\newblock In \emph{Advances in neural information processing systems}, pages 849--856, 2001.

\bibitem[Niu et~al.(2024)Niu, Zhang, Liang, and Hu]{niu2024owmatch}
Shengjie Niu, Wenjie Zhang, Yanyan Liang, and Xia Hu.
\newblock Owmatch: Conditional self-labeling with consistency for open-world semi-supervised learning.
\newblock In \emph{Advances in Neural Information Processing Systems}, 2024.
\newblock URL \url{https://proceedings.neurips.cc/paper_files/paper/2024/file/b4fd162d3e2d015233486a2e313828a7-Paper-Conference.pdf}.

\bibitem[Pelleg and Moore(2000)]{pelleg2000x}
Dan Pelleg and Andrew~W Moore.
\newblock X-means: Extending k-means with efficient estimation of the number of clusters.
\newblock In \emph{Proceedings of the Seventeenth International Conference on Machine Learning}, pages 727--734. Morgan Kaufmann, 2000.

\bibitem[Radford et~al.(2021)Radford, Kim, Hallacy, Ramesh, Goh, Agarwal, Sastry, Askell, Mishkin, Clark, Krueger, and Sutskever]{pmlr-v139-radford21a}
Alec Radford, Jong~Wook Kim, Chris Hallacy, Aditya Ramesh, Gabriel Goh, Sandhini Agarwal, Girish Sastry, Amanda Askell, Pamela Mishkin, Jack Clark, Gretchen Krueger, and Ilya Sutskever.
\newblock Learning transferable visual models from natural language supervision.
\newblock In Marina Meila and Tong Zhang, editors, \emph{Proceedings of the 38th International Conference on Machine Learning}, volume 139 of \emph{Proceedings of Machine Learning Research}, pages 8748--8763. PMLR, 18--24 Jul 2021.
\newblock URL \url{https://proceedings.mlr.press/v139/radford21a.html}.

\bibitem[Salmanpour et~al.(2022)]{salmanpour2022longitudinal}
Mohammad~R Salmanpour et~al.
\newblock Longitudinal clustering analysis and prediction of parkinson’s disease progression.
\newblock \emph{Quantitative Imaging in Medicine and Surgery}, 12\penalty0 (3):\penalty0 292--300, 2022.

\bibitem[Schwarz(1978)]{schwarz1978estimating}
Gideon Schwarz.
\newblock Estimating the dimension of a model.
\newblock \emph{The annals of statistics}, pages 461--464, 1978.

\bibitem[Touvron et~al.(2023)Touvron, Lavril, Izacard, Martinet, Lachaux, Lacroix, Rozière, Goyal, Hambro, Azhar, Rodriguez, Joulin, Grave, and Lample]{touvron2023llama}
Hugo Touvron, Thibaut Lavril, Gautier Izacard, Xavier Martinet, Marie-Anne Lachaux, Timothée Lacroix, Baptiste Rozière, Naman Goyal, Eric Hambro, Faisal Azhar, Aurelien Rodriguez, Armand Joulin, Edouard Grave, and Guillaume Lample.
\newblock Llama: Open and efficient foundation language models, 2023.

\bibitem[Vo et~al.(2024)Vo, Khalidov, Darcet, Moutakanni, Smetanin, Szafraniec, Touvron, Couprie, Oquab, Joulin, et~al.]{vo2024automatic}
Huy~V Vo, Vasil Khalidov, Timoth{\'e}e Darcet, Th{\'e}o Moutakanni, Nikita Smetanin, Marc Szafraniec, Hugo Touvron, Camille Couprie, Maxime Oquab, Armand Joulin, et~al.
\newblock Automatic data curation for self-supervised learning: A clustering-based approach.
\newblock \emph{arXiv preprint arXiv:2405.15613}, 2024.

\bibitem[Wagner and Harmeling(2024)]{wagner2024just}
Stefan~S. Wagner and Stefan Harmeling.
\newblock Just cluster it: An approach for exploration in high-dimensions using clustering and pre-trained representations.
\newblock In \emph{International Conference on Machine Learning}, 2024.
\newblock URL \url{https://proceedings.mlr.press/v235/wagner24a.html}.

\bibitem[Wessman et~al.(2012)Wessman, Schönauer, Miettunen, Turunen, Parviainen, Seppänen, Congdon, Service, Keltikangas-Järvinen, Ekelund, Lönnqvist, Freimer, Veijola, Mannila, and Peltonen]{wessman2012temperament}
Jaana Wessman, Stefan Schönauer, Jouko Miettunen, Hannu Turunen, Pekka Parviainen, Jukka~K. Seppänen, Eliza Congdon, Susan Service, Liisa Keltikangas-Järvinen, Jesper Ekelund, Jouko Lönnqvist, Nelson~B. Freimer, Juha Veijola, Heikki Mannila, and Leena Peltonen.
\newblock Temperament clusters in a normal population: Implications for health and disease.
\newblock \emph{PLOS ONE}, 7\penalty0 (7):\penalty0 e33088, 2012.
\newblock \doi{10.1371/journal.pone.0033088}.
\newblock URL \url{https://journals.plos.org/plosone/article?id=10.1371/journal.pone.0033088}.

\bibitem[Xu et~al.(2025)Xu, Chen, Jiang, Fu, Wang, and Jiao]{xu2025analyzing}
Gaoqing Xu, Qun Chen, Shuhang Jiang, Xiaohang Fu, Yiwei Wang, and Qingchun Jiao.
\newblock Analyzing the capability description of testing institution in chinese phrase using a joint approach of semi-supervised k-means clustering and bert.
\newblock \emph{Scientific Reports}, 15\penalty0 (1):\penalty0 11331, 2025.

\bibitem[Zhang and Li(2014)]{zhang2014functional}
Sheng Zhang and Chiang-shan~R Li.
\newblock Functional clustering of the human inferior parietal lobule by whole-brain connectivity mapping of resting-state fmri signals.
\newblock \emph{Brain Connectivity}, 4\penalty0 (7):\penalty0 547--557, 2014.

\end{thebibliography}
\bibliographystyle{plainnat}

\appendix

\newpage 
\section{Dependence of $k$ on DBSCAN Parameters} \label{app:dbscan-vary-k}
Although DBSCAN does not explicitly require setting~$k$, its two key parameters,\texttt{eps} and \texttt{min-pts}, essentially determine a value for~$k$ indirectly. As can be seen in Figure~\ref{fig:enter-label}   the different values for $k$~found by DBSCAN for different values of eps and min-pts range from 6~to over~4,000. In general, smaller \texttt{eps} and smaller \texttt{min-pts} produces more clusters. The number of annotated classes is~10.

\begin{figure}[t]
    \centering
    \includegraphics[width=\linewidth]{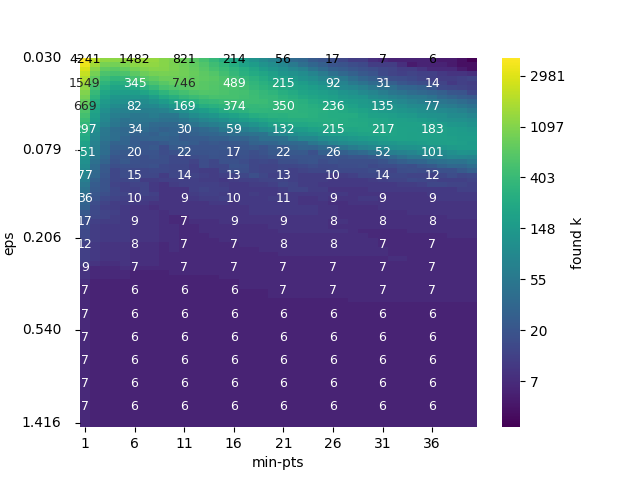}
    \caption{Values of $k$ (number of clusters) found on MNIST for different values of the DBSCAN parameters, \texttt{min-pts} (x-axis) and \texttt{eps} (y-axis). We sweep \texttt{min-pts} from 1--40, and \texttt{eps} from~0.03 to~1.5 in 5\%~increments.}
    \label{fig:enter-label}
\end{figure}

\FloatBarrier

\section{Derivation of MDL Clustering Objective} \label{app:mdl-objective-derivation}
The objective to derive is that from Section \ref{subsec:quantifying-dl}:
\begin{equation*}
   P^* = \argmin_{P \in \Pi(X)} |P|dm + |X|\log{|P|} + \frac{1}{2}\sum_{S \in P} Q(S)\,.
\end{equation*}
The first two terms are direct expressions of the cost to specify the centroids (each costs $dm$ bits and there are $|P|$ of them), and the cluster labels (each costs $\log{|P|}$ bits and there are $|X|$ of them. The third term arises from the expression for the negative log-probability, and the fact that we can drop additive constants in the argmin. Let $c(P,x)$ be the centroid of the cluster $x$ belongs to under partition $P$. Then
\begin{gather*}
    \argmin_{P \in \Pi(X)} |P|dm + |X|\log{|P|} + \sum_{x \in X} \frac{d \log{2\pi} + ||x-c(P,x)||^2}{2} = \\
    \argmin_{P \in \Pi(X)} |P|dm + |X|\log{|P|} + \frac{1}{2}\sum_{x \in X} ||x-c(P,x)||^2 = \\
    \argmin_{P \in \Pi(X)} |P|dm + |X|\log{|P|} + \frac{1}{2}\sum_{S \in P} \sum_{x \in S} ||x-c(P,x)||^2 = \\
    \argmin_{P \in \Pi(X)} |P|dm + |X|\log{|P|} + \frac{1}{2}\sum_{S \in P} Q(S)\,.
\end{gather*}

\section{Proof of Convergence} \label{app:cnvrg-proof}
The \method algorithm is guaranteed to converge. We now provide a proof of this fact.

\begin{lemma} \label{lemma:assign-decreases}
    At each \texttt{assign} step (which is the same step as vanilla $k$-means), the \ac{mdl} cost either decreases or remains the same, and it remains the same only if no points are reassigned.
\end{lemma}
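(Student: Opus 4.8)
The plan is to reduce the statement to the classical monotonicity of the $k$-means assign step, while carefully accounting for the one way in which the MDL cost differs from the raw $k$-means objective. First I would observe that the assign step holds both the centroids and the number of clusters fixed: it only relabels points. Consequently the model-cost term $|P|dm$ and the index-cost term $|X|\log|P|$ in Equation~\eqref{eq:mdl-objective} are unchanged across the step, so it suffices to show that the residual term $\frac12\sum_{S\in P}Q(S)$ does not increase, and decreases whenever a point is relabelled.

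The hard part is that $Q(S)=|S|\Var(S)=\sum_{x\in S}\|x-\mathrm{mean}(S)\|^2$ is defined relative to the \emph{true mean} of each cluster, whereas the assign step chooses each point's label using the \emph{old} centroids $c^{\mathrm{old}}_j$ (the means of the clusters before the step). So after relabelling, the residual is evaluated against the new cluster means rather than against the $c^{\mathrm{old}}_j$ that actually drove the assignment. I would bridge this gap with the parallel-axis (mean-optimality) inequality: for any finite set and any reference point $r$, $\sum_{x}\|x-\mathrm{mean}\|^2\le\sum_{x}\|x-r\|^2$, since the mean minimises the sum of squared deviations.

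With this in hand the proof is a three-term sandwich. Writing $P_{\mathrm{old}},P_{\mathrm{new}}$ and $a_{\mathrm{old}},a_{\mathrm{new}}$ for the partitions and assignments before and after the step, I would chain
\[
\sum_{S\in P_{\mathrm{new}}}Q(S)\;\le\;\sum_{x\in X}\bigl\|x-c^{\mathrm{old}}_{a_{\mathrm{new}}(x)}\bigr\|^2\;\le\;\sum_{x\in X}\bigl\|x-c^{\mathrm{old}}_{a_{\mathrm{old}}(x)}\bigr\|^2\;=\;\sum_{S\in P_{\mathrm{old}}}Q(S),
\]
where the first inequality is mean-optimality applied clusterwise to $P_{\mathrm{new}}$ with references $c^{\mathrm{old}}_j$, the second is the defining property of the assign step (each point is moved to its nearest old centroid, minimising the sum termwise), and the final equality uses that the $c^{\mathrm{old}}_j$ are exactly the means of the old clusters. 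This establishes that the residual, and hence the full MDL cost, is non-increasing.

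For the equality clause, I would argue that a strict decrease occurs as soon as any point is relabelled: a relabelling happens only when some point's new centroid is strictly closer than its old one (ties being resolved in favour of the incumbent, consistent with the strict inequalities used for the subcluster assignment), which makes the middle inequality strict. Contrapositively, if the cost is unchanged then no point was relabelled, i.e. $P_{\mathrm{new}}=P_{\mathrm{old}}$; the converse is immediate. The only edge case to note is that the assign step might empty a cluster, reducing $|P|$ and thereby lowering the model- and index-cost terms as well; this only strengthens the decrease and still requires points to have been relabelled, so it remains consistent with the claim.
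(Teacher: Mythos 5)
Your proof is correct, and it takes a more careful route than the paper's. The paper's argument treats the residual cost as the sum of squared distances from each point to its \emph{currently stored} centroid, which is held fixed during the assign step; reassignment then trivially cannot increase any term, and strictly decreases the term of any relabelled point. In other words, the paper views the cost as a function of the pair (assignment, centroids) and only needs the defining property of nearest-centroid assignment. You instead view the cost as a function of the partition alone, with the residual measured against the true cluster means as in the definition $Q(S)=|S|\Var(S)$ from Equation~\eqref{eq:mdl-objective} --- which is more faithful to the stated objective --- and this forces you to add the mean-optimality (parallel-axis) inequality to close the three-term sandwich. What your version buys is that it proves monotone descent of the objective exactly as written, without implicitly assuming the stored centroids coincide with the cluster means; what it costs is the extra hypothesis that the old centroids \emph{are} the old cluster means (needed for your final equality), which holds in the algorithm because assign always follows an update (or the mean initialisation), but is worth stating explicitly. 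Your handling of the strictness clause and the empty-cluster edge case is sound and in fact slightly more complete than the paper's, which does not discuss either the tie-breaking convention or the possibility of $|P|$ dropping.
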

\begin{proof}
As defined in Section \ref{subsec:quantifying-dl}, there are two components to the \ac{mdl} cost: the index cost, consisting of the bits to specify the cluster membership of each point, and the residual cost, consisting of the bits corresponding to the displacement of each from its cluster centre. The former depends only on the number of points $N$ and the number of clusters $k$, and is unaffected by re-assignment. The latter is proportional to the sum of squared distances of each point to its assigned centre. By definition of reassignment, if a point is reassigned, then it is closer to its new centroid than its old centroid. Thus, every reassignment does not affect the first two summands of the cost and strictly reduces the third.  
\end{proof}

\begin{lemma} \label{lemma:update-decreases}
    At each \texttt{update} step (which is the same step as vanilla $k$-means), the \ac{mdl} cost either decreases or remains the same, and it remains the same only if no centroids are updated.
\end{lemma}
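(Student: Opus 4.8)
The plan is to mirror the structure of the proof of Lemma~\ref{lemma:assign-decreases}, but with the roles of the two variables swapped: there the centroids were held fixed and assignments varied, whereas here the assignments are held fixed and the centroids vary. First I would observe that an \texttt{update} step changes only the centroid locations, not the membership of points in clusters; hence the partition $P$, the number of clusters $|P|$, and the point count $|X|$ are all unchanged. Consequently the model cost $|P|dm$ and the index cost $|X|\log|P|$ --- the first two summands of Equation~\eqref{eq:mdl-objective} --- are untouched, and only the residual term $\frac{1}{2}\sum_{S\in P} Q(S)$ can change.

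Next I would note that this residual term decomposes into independent per-cluster contributions $Q(S) = \sum_{x\in S}\|x-c_S\|^2$, each depending only on its own centroid $c_S$, so it suffices to analyse a single cluster. The key step is the elementary fact that, for a fixed finite point set $S$, the map $c \mapsto \sum_{x\in S}\|x-c\|^2$ is a strictly convex quadratic in $c$ and is therefore minimised at the unique point $c = \frac{1}{|S|}\sum_{x\in S} x$, namely the arithmetic mean. This is exactly the value the \texttt{update} step assigns to each centroid.

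From this the conclusion follows immediately. Setting each $c_S$ to the mean of $S$ attains the per-cluster minimum of its residual, so the total residual, and hence the full MDL cost, cannot increase. For the equality clause, if some centroid genuinely moves during the update then its previous value was not the cluster mean; by uniqueness of the minimiser the corresponding $Q(S)$ strictly decreases, forcing a strict decrease in the overall cost. Conversely, if no centroid moves then every centroid already equalled its cluster mean and the cost is unchanged. This establishes both the non-increase and the ``remains the same only if no centroids are updated'' direction.

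I expect the only subtlety requiring care is the subcluster bookkeeping: the \texttt{update} step also recomputes the subcentroids $\mu_s$, but since the cost function used throughout (the \texttt{MDLCost} routine in Algorithm~\ref{alg:k-star-means}) is a function of the main clusters and centroids $(\mu, C)$ alone, subcentroid movements never enter the cost and can be disregarded for this lemma. The genuinely load-bearing ingredient is the uniqueness of the mean as the minimiser of the squared-distance sum, which is what underwrites the strict-decrease half of the statement; everything else is routine.
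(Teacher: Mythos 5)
Your proposal is correct and follows essentially the same route as the paper's proof: both reduce the claim to the fact that the index and model costs are unaffected by an update, and that the cluster mean is the unique minimiser of the per-cluster sum of squared distances (the paper establishes this by differentiating the quadratic, you by invoking its strict convexity, which is the same fact). Your explicit handling of the strict-decrease clause via uniqueness of the minimiser, and your remark that subcentroid updates do not enter the cost, are slightly more careful than the paper's write-up but do not constitute a different approach.
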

\begin{proof}
As with the assign step, the update step does not change $k$ or $N$ and so does not affect the index cost. The latter can be written as the sum across clusters of the sum of all points in that cluster from the centroid. When the centroids are updated, they are updated to the mean of all points of points in the cluster, which is the unique minimiser of the sum of squared distances. As well as a standard statistical fact, this can be seen by observing that the $SS(x) = \sum_{i=1}^m (x-x_i)^2$ is a u-shaped function of $x$, so achieves its global minimum when 
\begin{gather*}
    SS'(x) = 0 \iff \\
    2 \sum_{i=1}^m x - x_i = 2mx - 2\sum_{i=1}^m x_i = 0 \iff \\
    x = \frac{1}{m}\sum_{i=1}^m x_i\,.
\end{gather*}
This holds for the reassignment of each cluster, and so for the whole reassignment step.
\end{proof}

\begin{theorem}
    The \method algorithm is guaranteed to converge in finite time.
\end{theorem}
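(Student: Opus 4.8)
The plan is to extend the classical finite-termination argument for $k$-means, using the two monotonicity facts already in hand. Lemmas~\ref{lemma:assign-decreases} and~\ref{lemma:update-decreases} show that \texttt{assign} and \texttt{update} never increase the MDL cost of~\eqref{eq:mdl-objective}, with equality only when nothing changes. First I would prove the two missing facts: that \texttt{maybe-split} and \texttt{maybe-merge} are also cost-non-increasing, and strictly decrease the cost whenever they actually act. This follows from the design of the steps. A split of $S$ into $S_1,S_2$ changes the model term by $+dm$, the index term by $+|X|\log\frac{|P|+1}{|P|}$, and the residual term by $\tfrac12\bigl(Q(S_1)+Q(S_2)-Q(S)\bigr)\le 0$, while the additive constant $\tfrac12|X|d\log 2\pi$ (which is partition-independent) cancels; the merge case is symmetric. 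Each step evaluates exactly this change and performs the operation only when it is negative, so both steps never increase the cost and strictly decrease it when they fire. Chaining the four facts, the realized MDL cost is non-increasing along the entire run and strictly decreases at any step that reassigns a point, splits, or merges.

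Next I would upgrade ``cost never increases'' to ``only finitely many strict decreases can occur.'' The key structural observation is that at the end of every cycle the stored centroids coincide with the means of the current main partition: \texttt{update} sets them to cluster means, \texttt{Split} promotes subclusters whose subcentroids are their means, and \texttt{Merge} assigns the merged cluster the mean of the union, all leaving the other centroids untouched. Hence the end-of-cycle realized cost equals $f(P_t)$, the value of~\eqref{eq:mdl-objective} evaluated on the main partition $P_t$ alone. Any cycle that does not trigger termination must change the main assignment---via a reassignment, a split, or a merge---and therefore strictly decreases the cost, giving $f(P_{t+1})<f(P_t)$. The sequence $f(P_0)>f(P_1)>\cdots$ is thus strictly decreasing. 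Since $X$ is finite it has only finitely many partitions, so a strictly decreasing $f$-sequence ranging over them must have finitely many terms; the number of non-terminal cycles is finite, each cycle performs a bounded amount of work, and the algorithm halts. Note that this does not even require a lower bound on $f$: finiteness of the partition lattice does all the work, and in particular no partition can ever recur.

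The step I expect to be the main obstacle is making the \texttt{maybe-split}/\texttt{maybe-merge} monotonicity \emph{exact} rather than approximate. The thresholds in Algorithms~2 and~3 use the linearisation $\log\frac{k+1}{k}\approx\frac{1}{k+1}$ for the index-cost change, and their normalisation of the residual and model terms does not literally reproduce the $\Delta$ computed from~\eqref{eq:mdl-objective}, so the quantity the algorithm compares against zero is only a proxy for the true change in the objective. To close this gap I would either (i) replace the linearised thresholds by the exact log-differences $|X|\log\frac{k+1}{k}$ and $|X|\log\frac{k}{k-1}$ (and restore the $\tfrac12$ and $+dm$ terms), after which the argument above applies verbatim, or (ii) show that whenever a split or merge fires the residual-cost change dominates the $O(|X|/k^2)$ approximation error, so the exact objective~\eqref{eq:mdl-objective} still strictly decreases. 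Everything else---the cancellation of the constant $\tfrac12|X|d\log 2\pi$ across splits and merges, and the fact that centroids equal cluster means at the end of each cycle---is routine to verify.
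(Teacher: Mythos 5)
Your proposal is correct and follows essentially the same route as the paper's proof: monotonicity of \texttt{assign} and \texttt{update} via Lemmas~\ref{lemma:assign-decreases} and~\ref{lemma:update-decreases}, monotonicity of \texttt{maybe-split} and \texttt{maybe-merge} because they test the cost change before acting, and then finiteness of the partition lattice (the paper bounds the number of steps by the Bell number $B_{N+1}$) to conclude that a strictly decreasing, non-recurring sequence of configurations must terminate. The one place you go beyond the paper is in flagging that the thresholds in Algorithms~2 and~3 use the linearisation $\log\frac{k+1}{k}\approx\frac{1}{k+1}$ and so only approximate the true change in~\eqref{eq:mdl-objective}, whereas the paper simply asserts that these steps check the exact MDL decrease; this is a genuine loose end in the paper's argument, and your fix~(i) of replacing the proxy with the exact log-difference is the clean way to make the theorem airtight.
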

\begin{proof}
    By Lemmas \ref{lemma:assign-decreases} and \ref{lemma:update-decreases}, the \ac{mdl}\ cost strictly decreases at each step at which points are reassigned and centroids updated. The other two steps, \texttt{maybe-split} and \texttt{maybe-merge}, include explicit steps that the \ac{mdl} cost decreases before being performed, so also are guaranteed to strictly decrease the \ac{mdl} cost. Together, this means the algorithm will never revisit an assignment during training. Moreover, there are a finite number of assignments, equal to the number of partitions of $N$ data points, which is given by the $N+1$th Bell number \cite{graham1994concrete}, $B_{N+1}$. Therefore \method cannot run for more than a finite number, namely $B_{N+1}$, steps.
\end{proof}

\begin{remark}
    This is an extension of the standard proof of convergence of $k$-means. Like for $k$-means, this proof establishes a theoretical worst-case run time which is exponential but then, in practice, the algorithm converges quickly. Practical empirical runtimes are studied in detail in Section \ref{sec:experimental-eval}.
\end{remark}

\end{document}